\numberwithin{equation}{section} 
\theoremstyle{plain}
\newtheorem{theorem}{Theorem}[section] 
\newtheorem{lemma}[theorem]{Lemma} 
\newtheorem{corollary}[theorem]{Corollary}
\theoremstyle{definition}
\newtheorem{definition}[theorem]{Definition}
\newtheorem{assumption}[theorem]{Assumption}
\theoremstyle{remark}
\newtheorem{remark}[theorem]{Remark}
\newcommand{\secref}[1]{Section~\ref{#1}}
\newcommand{\thmref}[1]{Theorem~\ref{#1}}
\newcommand{\lemref}[1]{Lemma~\ref{#1}}
\newcommand{\corref}[1]{Corollary~\ref{#1}}
\renewcommand{\algref}[1]{Algorithm~\ref{#1}}
\newcommand{\assref}[1]{Assumption~\ref{#1}}
\newcommand{\R}{\mathbb{R}} 
\newcommand{\bigO}{\mathcal{O}} 
\DeclareMathOperator*{\argmin}{arg\,min} 
\newcommand{\grad}{\nabla} 
\newcommand{\Agg}{\text{\sc Agg}}
\def\be{\begin{equation}}
\def\ee{\end{equation}}
\algrenewcommand\algorithmicrequire{\textbf{Input:}}
\algrenewcommand\algorithmicensure{\textbf{Output:}}
\newcommand{\revision}[1]{{#1}}  
\begin{document}
\title{A Simplified Convergence Theory for Byzantine Resilient Stochastic Gradient Descent\thanks{This work has been published in the \textit{EURO Journal on Computational Optimization}. See \texttt{https://doi.org/10.1016/j.ejco.2022.100038} for the final version.}}
\author[1,2]{Lindon Roberts}
\author[1]{Edward Smyth}
\affil[1]{\small Mathematical Sciences Institute, Building 145, Science Road, Australian National University, Canberra ACT 2601, Australia (\texttt{first.last@anu.edu.au}).}
\affil[2]{\small School of Mathematics and Statistics, Carslaw Building, University of Sydney, Camperdown NSW 2006, Australia (\texttt{lindon.roberts@sydney.edu.au}).}

\date{\today}
\maketitle

\begin{abstract}
	In \revision{distributed} learning, a central server trains a model according to updates provided by \revision{nodes} holding local data samples. In the presence of one or more malicious servers sending incorrect information (a Byzantine adversary), standard algorithms for model training such as stochastic gradient descent (SGD) fail to converge. 
	In this paper, we present a simplified convergence theory for the generic Byzantine Resilient SGD method originally proposed by Blanchard et al.~[NeurIPS 2017]. \nocite{blanchard2017byzantinetolerant}
	Compared to the existing analysis, we shown convergence to a stationary point in expectation under standard assumptions on the (possibly nonconvex) objective function and flexible assumptions on the stochastic gradients.
\end{abstract}

\textbf{Keywords:} stochastic optimization, \revision{distributed} learning, Byzantine resilience.
\\

\textbf{Mathematics Subject Classification (2020):} 90C15, 65K05, 68T07  

\section{Introduction} \label{sec_introduction}
\revision{Distributed learning is a branch of machine learning in which large datasets are stored on several client devices, and a central server controls the learning process \cite{NIPS2010_abea47ba,MLSYS2019_c74d97b0}. 
In this setting, the local data is never shared or transferred across clients, or shared with the central server, and the actual optimization can be performed either by the central server or separately on each client \cite{NIPS2006_77ee3bc5,45187}. 
In the first case, learning is performed by the central server requesting (stochastic) gradient information from each client and aggregating this information within its optimization procedure \cite{10.1109/ALLERTON.2019.8919791,JMLR:v11:teo10a}.

From a mathematical optimization perspective, distributed learning presents several challenges including communication overheads \cite{46622,Dai2021} and asynchronicity \cite{10.1109/ALLERTON.2019.8919791}.
This may be further complicated in the case of federated learning, where the client datasets are assumed to be heterogeneous \cite{wang2021field}.}
Here, we consider the case of \revision{distributed} learning in the presence of adversarial attacks; the reliance on many clients and network communication makes this a more relevant concern than in traditional learning.
Specifically, we consider the case of Byzantine adversaries, which alter the gradient information sent from (potentially multiple) clients to the server, for example through data poisoning \cite{Biggio2012,chen2017targeted}, where individual clients' datasets are altered, or model update poisoning \cite{bhagoji2019analyzing}, where the information sent to the central server is directly corrupted.
The goal of adversarial attacks can be untargeted, attempting to decrease the model's accuracy overall \cite{Biggio2012}, or targeted, aiming to change the model's behavior on only a specific subset of inputs \cite{chen2017targeted,bhagoji2019analyzing}.
A comprehensive taxonomy of adversarial attacks is given in \cite[Section 5]{kairouz2021advances}.

In the presence of adversarial attacks, standard optimization methods such as stochastic gradient descent (SGD) can fail \cite{blanchard2017byzantinetolerant}, and so the development of optimization algorithms which are Byzantine resilient is of critical importance to the success of \revision{distributed} learning.
Where the individual clients' datasets are i.i.d., algorithms such as SGD can be \revision{made} Byzantine resilient using robust averaging techniques.
In this paper, we introduce a simplified convergence analysis of Byzantine Resilient SGD (BRSGD) \cite{blanchard2017byzantinetolerant} for nonconvex learning problems.
Our analysis covers the same range of approaches for aggregating gradient information from i.i.d.~client datasets, but uses more standard smoothness assumptions on the objective function and the non-corrupted stochastic gradients.
As a trade-off, we show convergence to first-order optimality in expectation, rather than almost surely as in \cite{blanchard2017byzantinetolerant}.
\revision{We also show a convergence rate of $\bigO(1/K^{(1-p)/2})$ when the learning rate sequence is chosen as $\bigO(k^{-p})$ for $p\in(1/2,1)$, which was not provided in \cite{blanchard2017byzantinetolerant}.}

\paragraph{Related Work}
A general analysis of SGD in a federated learning setting \revision{(which generalizes distributed learning)} for nonconvex objectives can be found in \cite{li2020unified}.
In the case of Byzantine resilient SGD for i.i.d.~client datasets, several different robust averaging techniques have been proposed.
These include the geometric median \cite{Chen2017,xie2018generalized,pillutla2019robust}, coordinate-wise medians and trimmed means \cite{pmlr-v80-yin18a,pmlr-v97-yin19a}, neighborhood-based averaging \cite{blanchard2017byzantinetolerant}, iterative filtering \cite{su2019securing,pmlr-v97-yin19a} and combinations of multiple such approaches \cite{pmlr-v80-mhamdi18a}.

The types of convergence theory of these different methods varies.
Convergence only for strongly convex objectives is considered in \cite{Chen2017,pillutla2019robust,pmlr-v80-yin18a}, and in \cite{su2019securing} for the full population loss (rather than the empirical loss).
For nonconvex objectives, high probability convergence to approximate first- and second-order optimal points is given in \cite{pmlr-v80-yin18a} and \cite{pmlr-v97-yin19a} respectively.

Alternatively, \cite{blanchard2017byzantinetolerant} considers BRSGD applied to nonconvex objectives with decreasing learning rates, which proves almost sure convergence to stationary points rather than to a neighborhood, similar to the standard SGD setting \cite{bottou2018optimization}.
This analysis applies to a generic robust aggregator satisfying specific assumptions.
Different aggregations methods which satisfy this assumption have been proposed in \cite{blanchard2017byzantinetolerant,xie2018generalized,pmlr-v80-mhamdi18a}.

We conclude by noting that \cite{Xie2019,data2020byzantineresilient,pmlr-v139-data21a} consider the case of Byzantine resilient federated learning in the case of non-i.i.d.~datasets and have developed specific robust aggregators suited for this setting (with associated convergence theory).
This is a more difficult problem, made visible for instance by \cite{data2020byzantineresilient} requiring a maximum of 25\% of clients be corrupted (rather than approximately 50\% for the i.i.d.~case).

\paragraph{Contributions}
In this paper, we present a simplified convergence result for BRSGD. 
Unlike the original analysis in \cite{blanchard2017byzantinetolerant} (which was based on the analysis of SGD in \cite{Bottou98}), we use standard smoothness assumptions on the objective, closer in spirit to the standard analysis of SGD \cite{bottou2018optimization}.
For the (non-corrupted) stochastic gradient estimates, we use a general expected smoothness assumption based on \cite{khaled2020}.
Under these conditions, we prove the convergence of BRSGD to stationary points in expectation: we note that this is weaker than the almost-sure result from \cite{blanchard2017byzantinetolerant}, coming from our more standard problem assumptions and simpler analysis. Our result and proof technique has some similarities to \cite[Lemma 4.3]{sebbouh2021sure}, which shows almost sure convergence of standard SGD under general expected smoothness conditions on stochastic gradients. 

As described above, since BRSGD in \cite{blanchard2017byzantinetolerant} is a generic framework, our results are applicable to any aggregation function satisfying the same assumptions, including all those in \cite{blanchard2017byzantinetolerant,xie2018generalized,pmlr-v80-mhamdi18a}.

\paragraph{Structure}
We begin by describing the general model of \revision{distributed} learning with Byzantine adversaries and describe the BRSGD method in \secref{sec_model}.
Our new convergence analysis is given in \secref{sec_convergence}.
\revision{The corresponding convergence rates are shown in \secref{sec_complexity}}, and we conclude in \secref{sec_conclusion}.

\paragraph{Notation}
We use $\|\cdot\|$ to be the Euclidean norm and $\langle \cdot, \cdot \rangle$ to be the corresponding inner product on $\R^d$, and let $[m]:=\{1,\ldots,m\}$.

\section{Problem \& Byzantine Resilient SGD} \label{sec_model}
We begin by describing the Byzantine adversarial model problem and the Byzantine resilient SGD algorithm from \cite{blanchard2017byzantinetolerant}.

\subsection{Byzantine Adversarial Model Problem}
In the \revision{distributed} learning problem, our data is \revision{split} across $m$ nodes (or devices) while our model is centralized \cite{kairouz2021advances}. For our $i^{\text{th}}$ node, $\{((x_i)_j, (y_i)_j)\}_{j=1}^{n_i}$ is our dataset, where $n_i$ is the number of data-points stored at that node. We will assume that each element is drawn from a distribution that is common across all nodes, $((x_i)_j, (y_i)_j) \sim \Omega$. As this distribution is theoretical, we replace our unknown distribution with the known empirical distribution $\Omega'_i$, where we select each element of our dataset with probability $\frac{1}{n_i}$. 
Having been given this dataset, from a space of functions parametrized by $w \in \mathbb{R}^d$, we wish to find a model function $f$ for which $f(x) \approx y$, for all $i \in [m]$ and $(x,y) \sim \Omega'_i$. Hence, we look for a function $f(\cdot;w)$ that minimizes the average empirical risk across our $m$ nodes: 
\begin{equation} 
    \min_{w \in \mathbb{R}^d} F(w) := \frac{1}{m} \sum^m_{i=1} F_i(w), \label{prob:2}
\end{equation}
where $F_i$ is the empirical risk of the model $f(w)$ on the $i^{\text{th}}$ node. Specifically: 
\begin{equation}
    F_i(w) :=  \frac{1}{n_i} \sum^{n_i}_{j=1} l((y_i)_j,f((x_i)_j;w)).
\end{equation}
where $l(\cdot,\cdot)$ is a loss function that quantifies the difference between the two values. Note that both the loss function and the model remain constant across the nodes. 

We solve \eqref{prob:2} with iterative methods converging to a neighbourhood of stationary points of our problems.
These iterative methods involve each node sending an estimation of the gradient of our function at the current point to the central server.  We model component failure or corruption by setting some of our nodes to be Byzantine adversaries \cite{blanchard2017byzantinetolerant}, who may send arbitrary values to the central server. 
In our problem, of our $m$ nodes, $q$ will be Byzantine adversaries as defined below. Typically we require $q < m/2$, however there are slight differences in some methods.

\begin{definition}[Byzantine Adversaries]
Let $\{(g_1)_k, (g_2)_k, ..., (g_m)_k\}$ be the set of correct local gradient estimators calculated by each node for the $k^{\text{th}}$ iteration of an algorithm. If $q$ out of $m$ vectors are Byzantine adversaries, then, the set of correct vectors at every iteration are partially replaced by vectors $\{(\Tilde{g}_1)_k, (\Tilde{g}_2)_k, ..., (\Tilde{g}_m)_k\}$, according to:
\begin{equation}
    (\Tilde{g}_i)_k :=     
    \begin{cases}
    (B_i)_k,& \text{if}\ \text{the $i^{\text{th}}$ node is Byzantine,}\\
    (g_i)_k,    & \text{otherwise.}
    \end{cases}
\end{equation}
The indices of the adversaries may change across iterations and the value of the Byzantine gradient, $(B_i)_k$, may be a function dependent on $\{(g_1)_k, (g_2)_k, ..., (g_m)_k\}$, the current value of the model $x_k$, the current step-size $\alpha_k$, or any previous information.
\end{definition}

In the taxonomy of \cite[Section 5]{kairouz2021advances}, this framework corresponds to dynamic, white box adversaries using within-update collusion, where the adversary functions via data poisoning or model update poisoning.

\subsection{BRSGD Algorithm}
We now outline the Byzantine Resilient SGD (BRSGD) algorithm framework for solving \eqref{prob:2} in the presence of Byzantine adversaries. 
In iteration $k$ of this method, initially from \cite{blanchard2017byzantinetolerant}, the central server initially collects the (possibly corrupted) local gradient information $(\Tilde{g}_i)_k \in \R^d$ from each node $i \in [m]$.
It then aggregates these $m$ vectors to produce a final gradient estimate $A_k\in\R^d$ using some aggregation function $\Agg\left((\Tilde{g}_1)_k, (\Tilde{g}_2)_k, ..., (\Tilde{g}_m)_k\right)$: we will later give specific requirements on $\Agg$.
We then take a gradient descent-type step with a pre-specified learning rate $\alpha_k>0$. 
The full BRSGD method is given in \algref{alg_brsgd}.

\begin{algorithm}[tb]
	\begin{algorithmic}[1]
		\Require Initial point $w_0 \in \mathbb{R}^d$, sequence of learning rates $\{\alpha_k\}^{\infty}_{k=0}$ with $\alpha_k > 0$ for all $k$, gradient aggregation function $\Agg: \underbrace{\R^d \times \cdots \times \R^d}_{\text{$m$ times}} \to \R^d$.
		\vspace{0.5em}
		\For{$k=0,1,2,\ldots$}
			\For{machines $i = 1,2,...,m$ in parallel}
				\If{$i$ is a non-Byzantine node}
					\State Compute local stochastic gradient\ $(\Tilde{g}_i)_k = (g_i)_k$, where $(g_i)_k \in \mathbb{R}^d$ is the (stochastic) gradient estimate at node $i$.
				\Else
					\State Set $(\Tilde{g}_i)_k = (B_i)_k$, where $(B_i)_k \in \mathbb{R}^d$ is the attack gradient.
				\EndIf
			\EndFor
		    \State Aggregate received gradient information: $A_k = \Agg\left((\Tilde{g}_1)_k, (\Tilde{g}_2)_k, ..., (\Tilde{g}_m)_k\right)$.
		    \State Set $w_{k+1} = w_k - \alpha_k A_k$.
		\EndFor
	\end{algorithmic}
	\caption{Byzantine Resilient Stochastic Gradient Descent (BRSGD) \cite{blanchard2017byzantinetolerant}}
	\label{alg_brsgd}
\end{algorithm}

Since the non-corrupted local gradient estimators $(g_i)_k$ can be stochastic gradient estimates, we also outline our formal stochastic approach to this problem.
We introduce a probability space, $(\mathbb{P},\mathcal{F},\Omega)$, with an associated filtration $(\mathcal{F}_k)_{k\in\mathbb{N}}$ to model our gradient information as an adapted process on a filtration, based on framework from \cite{paquette2018}. 
Our sample space will be $\Omega' := \Omega_1' \times \Omega_2' \times... \times \Omega_m'$. Let the random variable modelling the sample drawn at the $i^{th}$ node for the $k^{th}$ iterate be $(\zeta_k)_i := (\zeta_k)_i (\omega)$. 
The value of the gradient estimator at the $i^{th}$ node for the $k^{th}$ iterate will be modelled by $g_i(w_k) := G(w_k,(\zeta_k)_i)$, where $G(w,\zeta)$ is the random variable modelling gradient updates at point $w$ and it will be denoted $G(w)$. 
Since our datasets are assumed to be i.i.d., our function $G(w_k, \zeta)$ is not dependent on the node, and hence, we have the following property: 
\begin{equation}
    \mathbb{E}[g_1(w_k)] = \mathbb{E}[g_2(w_k)] = \cdots = \mathbb{E}[g_m(w_k)].
\end{equation}
Our filtration $\mathcal{F}_k$ will be the $\sigma$-algebra generated by our previous random variables, i.e. $\mathcal{F}_k = \sigma \left(\bigcup^{k-1}_{i=0} \bigcup^{m}_{j=1} \sigma((\zeta_i)_j) \right)$ and let $\mathcal{F} := \bigcup^{\infty}_{n=1} \mathcal{F}_k$. 
Therefore, both our aggregated and individual gradient estimates are an adapted processes on the filtration.  
Furthermore, the iterates of our algorithm will have the  property: 
\begin{equation}
    \mathbb{E}_k[w_{k}] = w_{k},
\end{equation}
where we denote the conditional expectation of a random variable with respect to filtration as $\mathbb{E}_k[\cdot] := \mathbb{E}[\cdot|\mathcal{F}_k]$. 

\paragraph{Choice of aggregation function}
For our convergence theory to hold, the aggregation function $\Agg$ must satisfy the following assumption, from \cite{blanchard2017byzantinetolerant}.

\begin{assumption}[\revision{$\alpha$}-Byzantine Resilience] \label{ass_resilient_agg}
Let $A_k := \Agg\left((\Tilde{g}_1)_k, (\Tilde{g}_2)_k, ..., (\Tilde{g}_m)_k\right)$.
Then there exists a constant \revision{$\alpha\in[0,\pi/2)$} such that for all $k$ we have
\begin{enumerate}
	\item \revision{$\langle \mathbb{E}_k[A_k], \nabla F(w_k) \rangle \geq (1 - \sin(\alpha)) \|\nabla F(w_k)\|^2$}, and 
	\item For $r=2,3,4$, $\mathbb{E}_k[\|A_k\|^r]$ bounded above by a linear combination of terms: \linebreak $\mathbb{E}_k[\|G(w_k)\|^{r_1}] \cdot \mathbb{E}_k[\|G(w_k) \|^{r_2}]\cdot ... \cdot \mathbb{E}_k[\|G(w_k)\|^{r_n}]$ with $r_1 + r_2 + ... + r_n = r$.
\end{enumerate}
\end{assumption}
\revision{
\begin{remark}
	We note that while \cite{blanchard2017byzantinetolerant} requires $r=2,3,4$ in \assref{ass_resilient_agg}(2), our convergence result only requires $r=2$. 
	In fact, our result requires only the weaker condition
	\begin{align}
		\mathbb{E}_k[\|A_k\|^2] \leq E \mathbb{E}_k[\|G(w_k)\|^2], \label{eq_resilient_agg_weaker}
	\end{align}
	for some constant $E$.
	The $r=2$ case of \assref{ass_resilient_agg}(2) implies \eqref{eq_resilient_agg_weaker} via
	\begin{align}
		\mathbb{E}_k[\|A_k\|^2] \leq C_1 \mathbb{E}_k[\|G(w_k)\|^2] + C_2 \mathbb{E}_k[\|G(w_k)\|]^2 \leq (C_1+C_2) \mathbb{E}_k[\|G(w_k)\|^2],
	\end{align}
	for some constants $C_1,C_2>0$, and where the second inequality follows from Jensen's inequality for conditional expectations.
\end{remark}
}

The works \cite{blanchard2017byzantinetolerant,xie2018generalized,pmlr-v80-mhamdi18a} give multiple examples of suitable functions $\Agg$ which satisfy \assref{ass_resilient_agg}, where \revision{$\alpha$} typically depends on $\Agg$, the number of clients $m$ and the number of corrupted clients $q$. 
\revision{Specifically, the $\Agg$ functions proposed in these works are:
\begin{itemize}
	\item Krum \cite{blanchard2017byzantinetolerant}, which returns the vector from $\{(\Tilde{g}_1)_k, (\Tilde{g}_2)_k, ..., (\Tilde{g}_m)_k\}$ that minimizes the distance between it and its nearest neighbours;
	\item Variations of the median \cite{xie2018generalized}, including the marginal (i.e.~component-wise) median, the geometric median
	\begin{align}
		\Agg\left((\Tilde{g}_1)_k, (\Tilde{g}_2)_k, ..., (\Tilde{g}_m)_k\right) = \argmin_{g\in\R^d} \sum_{i=1}^{m} \|g-(\Tilde{g}_i)_k\|,
	\end{align}
	and the `mean-around-median', the (component-wise) mean of the closest vectors to the marginal median;
	\item Bulyan \cite{pmlr-v80-mhamdi18a}, which uses an existing $\Agg$ function such as the above to generate a set of candidate gradients and then apply a `mean-around-median' aggregation to that set.
\end{itemize}
}

\subsection{Existing Convergence Theory}
We now describe the underlying assumptions and state the existing convergence theory for BRSGD (\algref{alg_brsgd}) as given in \cite{blanchard2017byzantinetolerant}.
The requirements on the objective \eqref{prob:2} are based on \cite{Bottou98}:

\begin{assumption} \label{ass_old_smoothness}
The objective function $F$ is $C^3$, bounded below, and there exist constants $D,\epsilon \geq 0$ and $0 \leq \beta < \frac{\pi}{2} $ such that, for all $w \in \mathbb{R}^d$ with $\|w\|^2 \geq D$, we have:
\begin{equation}
    \|\nabla F(w)\| \geq \epsilon, \quad \text{and} \quad  \frac{\langle w, \nabla \revision{F}(w) \rangle}{\|w\| \cdot \|\nabla \revision{F}(w)\|} \geq \cos(\beta). \label{eq_convex_eventually}
\end{equation}
\end{assumption}

Next, the requirement on the stochastic gradients is given by the following two assumptions.

\begin{assumption} \label{ass_old_gradients1}
The gradient estimator $G(w_k)$ is unbiased, $\mathbb{E}_k[G(w_k)] = \nabla F(w_k)$, and for all $r \in \{2,3,4\}$, there exist non-negative constants $A_r$ and $B_r$ such that \linebreak $\mathbb{E}_k[\|G(w_k)\|^r] \leq A_r + B_r \|w_k\|^r$.
\end{assumption}

We are now able to state the main convergence result from \cite{blanchard2017byzantinetolerant}.

\begin{theorem}[Proposition 2, \cite{blanchard2017byzantinetolerant}] \label{thm_old_convergence}
Suppose that Assumptions \ref{ass_resilient_agg}, \ref{ass_old_smoothness} and \ref{ass_old_gradients1} hold, with $\alpha + \beta < \pi/2$ (with $\alpha$ from \revision{\assref{ass_resilient_agg}} and $\beta$ from \assref{ass_old_smoothness}), \revision{and
\begin{align}
	\eta(m,q) \sqrt{\mathbb{E}[\|G(w,\zeta)-\grad F(w)\|^2]} \leq \sin(\alpha)\|\grad F(w)\|, \label{eq_eta_condition}
\end{align}
where $m > 2q+2$ and
\begin{align}
	\eta(m,q) := \sqrt{2\left(m-q+\frac{q(m-q-2) + q^2(m-q-1)}{m-2q-2}\right)}.
\end{align}
} 
\revision{If} we run \algref{alg_brsgd} with learning rates satisfying $\sum_{k=0}^{\infty} \alpha_k = \infty$ and $\sum_{k=0}^{\infty} \alpha_k^2 < \infty$.
Then $\lim_{k\to\infty} \|\grad F(w_k)\| = 0$ almost surely.
\end{theorem}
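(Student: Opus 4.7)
The plan is to follow the classical almost-sure SGD convergence analysis of Bottou, adapted to account for the aggregation via the $\alpha$-Byzantine resilience property of $A_k$. The strategy splits into three stages: (i) show that the iterates $w_k$ are almost surely confined to a bounded set, so that quantities depending on the (non-global) smoothness of $F$ can be controlled; (ii) derive a Robbins--Siegmund-style supermartingale inequality for $F(w_k)$ and extract $\sum_k \alpha_k \|\nabla F(w_k)\|^2 < \infty$ almost surely; (iii) upgrade the resulting $\liminf \|\nabla F(w_k)\| = 0$ to the full limit.

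For stage (i), I would introduce a confinement Lyapunov function such as $h_k := \phi(\|w_k\|^2)$, where $\phi$ is smooth, vanishes on $\{\|w\|^2 \leq D\}$, and grows polynomially outside. A Taylor expansion in $w_{k+1} = w_k - \alpha_k A_k$ gives, conditionally on $\mathcal{F}_k$, a bound of the form
\begin{align*}
    \mathbb{E}_k[h_{k+1}] \leq h_k - 2\alpha_k \phi'(\|w_k\|^2)\langle w_k, \mathbb{E}_k[A_k] \rangle + C \alpha_k^2 \cdot (\text{polynomial in }\|w_k\|\text{ and moments of }\|A_k\|).
\end{align*}
Outside $\{\|w\|^2 \leq D\}$, the angle condition from \assref{ass_old_smoothness} gives $\langle w, \nabla F(w)\rangle \geq \cos(\beta)\|w\|\|\nabla F(w)\|$; combining this with \assref{ass_resilient_agg}(1), the hypothesis $\alpha + \beta < \pi/2$, and the variance bound \eqref{eq_eta_condition} (which relates $\sin\alpha\cdot \|\nabla F\|$ to the stochastic noise), the leading drift is bounded below by a positive multiple of $\|w_k\|\|\nabla F(w_k)\|$. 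The $\alpha_k^2$ remainder is controlled via \assref{ass_resilient_agg}(2) for $r=2,3,4$ together with the polynomial moment bounds in \assref{ass_old_gradients1}. The Robbins--Siegmund theorem then yields almost-sure boundedness of $\{w_k\}$.

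For stage (ii), on the confinement event the $C^3$ objective has bounded Hessian along the orbit, so a second-order expansion gives $F(w_{k+1}) \leq F(w_k) - \alpha_k \langle \nabla F(w_k), A_k \rangle + \tfrac{L}{2}\alpha_k^2\|A_k\|^2$. Taking conditional expectations and applying \assref{ass_resilient_agg}(1) with \eqref{eq_eta_condition} yields
\begin{align*}
    \mathbb{E}_k[F(w_{k+1})] \leq F(w_k) - \alpha_k (1 - \sin\alpha) \|\nabla F(w_k)\|^2 + \tfrac{L}{2}\alpha_k^2 \mathbb{E}_k[\|A_k\|^2].
\end{align*}
Since $F$ is bounded below and $\sum_k \alpha_k^2 < \infty$ with $\mathbb{E}_k[\|A_k\|^2]$ bounded on the confinement event, a second application of Robbins--Siegmund gives almost-sure convergence of $F(w_k)$ and $\sum_k \alpha_k \|\nabla F(w_k)\|^2 < \infty$ a.s.\ Together with $\sum_k \alpha_k = \infty$, this forces $\liminf_k \|\nabla F(w_k)\| = 0$.

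For stage (iii), I would upgrade $\liminf = 0$ to $\lim = 0$ by a standard excursion argument: if $\limsup_k \|\nabla F(w_k)\| > 2\epsilon$ on a positive-probability event, then Lipschitz-ness of $\nabla F$ on the confinement set together with $\alpha_k\|A_k\| \to 0$ (a consequence of $\sum_k \alpha_k^2 \mathbb{E}_k[\|A_k\|^2] < \infty$) forces each up-crossing of $\|\nabla F(w_k)\|$ through the band $[\epsilon, 2\epsilon]$ to span a window on which $\|\nabla F(w_k)\|^2 \geq \epsilon^2$; the aggregate duration of such windows, weighted by $\alpha_k$, must then be infinite, contradicting $\sum_k \alpha_k \|\nabla F(w_k)\|^2 < \infty$. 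The main obstacle is stage (i): the confinement proof is precisely what motivates the higher-order moment requirements $r = 3, 4$ in \assref{ass_resilient_agg}(2) (needed to control the Taylor remainder of $h_k$ at large $\|w_k\|$) and the eventual-convexity angle condition in \assref{ass_old_smoothness}, both of which the authors' simplified analysis will circumvent by imposing direct global smoothness on $F$.
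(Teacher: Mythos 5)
First, a point of comparison: the paper does not prove this statement. \thmref{thm_old_convergence} is quoted from Proposition~2 of Blanchard et al.\ (itself built on Bottou's classical almost-sure SGD analysis), and the authors' own contribution (\thmref{thm_main_convergence}) is explicitly an alternative to that argument under different hypotheses. So there is no in-paper proof to compare against; what you have written is a reconstruction of the cited proof. As such a reconstruction it is faithful: the three-stage structure (almost-sure confinement via a Lyapunov function vanishing on $\{\|w\|^2\le D\}$, a Robbins--Siegmund inequality yielding $\sum_k\alpha_k\|\grad F(w_k)\|^2<\infty$ a.s., then an upgrade from $\liminf$ to $\lim$) is exactly how the original argument goes, and you correctly attribute each hypothesis to the stage that consumes it: the angle condition \eqref{eq_convex_eventually} together with $\alpha+\beta<\pi/2$ and \eqref{eq_eta_condition} makes the drift $\langle w_k,\mathbb{E}_k[A_k]\rangle$ eventually positive (note that \eqref{eq_eta_condition} is what places $\mathbb{E}_k[A_k]$ in the cone of half-angle $\alpha$ about $\grad F(w_k)$; \assref{ass_resilient_agg}(1) alone only lower-bounds an inner product and does not control the orthogonal component), while the $r=3,4$ moments in \assref{ass_resilient_agg}(2) combined with \assref{ass_old_gradients1} control the Taylor remainder of $\phi(\|w_k\|^2)$ at large $\|w_k\|$. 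The one place you genuinely deviate from the original is stage (iii): Blanchard et al./Bottou run a second quasi-martingale argument on $u_k:=\|\grad F(w_k)\|^2$ --- this is where the $C^3$ hypothesis is actually used, since expanding $u_{k+1}-u_k$ requires second derivatives of $\grad F$ --- and conclude that $u_k$ converges a.s., hence to $0$ by the $\liminf$ result. Your excursion argument is a legitimate alternative (and needs only a Lipschitz gradient on the confinement set), though as sketched it should be run through the displacement bound $\|w_{k'}-w_k\|\le\sum_{j=k}^{k'-1}\alpha_j\|A_j\|$ and Cauchy--Schwarz against $\sum_j\alpha_j\|\grad F(w_j)\|^2$ rather than an $\alpha_k$-weighted ``duration'' count. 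A minor quibble: in stage (ii) the descent inequality needs only \assref{ass_resilient_agg}(1), not \eqref{eq_eta_condition}.
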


We conclude by noting the technical condition $\alpha+\beta < \pi/2$, relating the errors in the stochastic gradients and the smoothness of the objective.
We do not need a condition like this in our analysis.
\revision{The condition \eqref{eq_eta_condition} relates the stochastic gradients to the aggregation function (provided there are not too many adversaries). Although our analysis does not require this condition explicitly, similar conditions are required to show Byzantine resilience of aggregation functions such as Krum \cite{blanchard2017byzantinetolerant} and the geometric median \cite{xie2018generalized}.}

\section{New Convergence Analysis} \label{sec_convergence}
We now present our new, simplified analysis of BRSGD.
Compared to \thmref{thm_old_convergence}, our result uses simpler and more common assumptions on both the objective $F$ and the stochastic gradient estimator $G(w,\zeta)$, and a specific choice of learning rate.
As a result, the conclusion is weaker: we get convergence to stationary points in expectation rather than almost surely.
It has the same requirements on the aggregating function $\Agg$.

In particular, we note that \thmref{thm_old_convergence} requires \eqref{eq_convex_eventually}, essentially that $F$ is `convex enough' outside a certain bounded region.
This has the downside, for example, of excluding the model space of neural networks with soft-max activation functions, a common model space in distributed learning, as the activation function has flat asymptotes \cite{Bottou98}. 

\begin{assumption} \label{ass_new_smoothness}
	Each term in the objective function $F$ is $L$-smooth (i.e.~continuously differentiable and $\grad F$ is $L$-Lipschitz continuous) and bounded below by some $F_{\text{low}}$.
\end{assumption}
The key implication of $F$ being $L$-smooth is the upper bound
\begin{equation}
F(y) \leq F(x) + \langle \nabla F(x),y-x \rangle + \frac{L}{2}\|y-x\|^2, \qquad \forall x,y\in\R^d. \label{eq_lipschitz}
\end{equation}

For our stochastic gradient estimator, we will use a version of the expected smoothness property \cite[Assumption 3.2]{khaled2020}.
We note that \cite{khaled2020} shows that variants of SGD including minibatching, importance sampling, gradient combinations and their combinations all satisfy expected smoothness.

\begin{assumption}(Expected Smoothness) \label{ass_new_gradients}
For all $k$, the gradient estimator $G(w_k)$ is unbiased:
\begin{equation}
     \mathbb{E}_k[G(w_k)] = \nabla F(w_k), \label{ass:1} 
\end{equation}
and there exist non-negative constants $A$, $B$ and $C$ (independent of $k$), such that: 
\begin{equation}
\mathbb{E}_k[\|G(w_k)\|^2] \leq 2A(F(w_k) -F_{\text{low}}) + B\|\nabla F(w_k)\|^2 + C. \label{ass:2} 
\end{equation}
\end{assumption}

To prove our result, we will need the below technical lemma, a generalization of \cite[Lemma 2]{khaled2020}, which corresponds to the case of $a_k \equiv a$ for all $k$.

\begin{lemma} \label{lem_weighting_sequence}
Let $\{r_k\}_{n \in \mathbb{N}}$, $\{d_k\}_{n \in \mathbb{N}}$ be positive-valued sequences, $\{a_k\}_{n \in \mathbb{N} \cup \{-1\}}$ be a non-increasing, positive-valued sequence and $L$, $A$ and $C$ be positive constants such that, for all $k \in \mathbb{N}$,
\begin{equation}
    \frac{a_k}{2} r_k \leq (1+a_k^2 L A)d_k -d_{k+1} + \frac{L C a^2_k}{2}. \label{lem4:1}
\end{equation}
Then, the following identity holds:
\begin{equation}
    \min_{0 \leq k \leq K-1} r_k \leq \frac{2 d_0}{a_{-1} K W_{K-1}}  + \frac{L C}{K W_{K-1}} \sum^{K-1}_{k=0} a_k W_k, \label{lem4:4}
\end{equation}
where $W_k$ is a weighting sequence defined by $W_{-1} = 1$ and $W_k = W_{k-1} \frac{a_k}{a_{k-1} (1+L A a_k^2)}$ for $k\geq 0$.
\end{lemma}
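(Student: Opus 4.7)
The plan is to multiply the one-step recursion \eqref{lem4:1} by a suitable positive weight so the coefficients on $d_k$ and $d_{k+1}$ align across consecutive indices, producing a telescoping sum, and then to exploit monotonicity of the weights to extract the minimum of $r_k$.

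Concretely, I would multiply \eqref{lem4:1} at step $k$ by $W_k/a_k$, giving
\begin{equation*}
	\frac{W_k}{2} r_k \;\leq\; \frac{W_k(1 + L A a_k^2)}{a_k}\, d_k \;-\; \frac{W_k}{a_k}\, d_{k+1} \;+\; \frac{L C\, a_k W_k}{2}.
\end{equation*}
The defining recurrence $W_k = W_{k-1}\, a_k / (a_{k-1}(1 + L A a_k^2))$ is designed exactly so that $W_k(1+L A a_k^2)/a_k = W_{k-1}/a_{k-1}$. Hence, setting $s_k := W_{k-1} d_k / a_{k-1}$, the right-hand side becomes $s_k - s_{k+1} + LC a_k W_k/2$, and summing for $k=0,\ldots,K-1$ collapses the middle term:
\begin{equation*}
	\sum_{k=0}^{K-1} \frac{W_k}{2} r_k \;\leq\; s_0 - s_K + \frac{L C}{2} \sum_{k=0}^{K-1} a_k W_k \;\leq\; \frac{d_0}{a_{-1}} + \frac{L C}{2} \sum_{k=0}^{K-1} a_k W_k,
\end{equation*}
where in the last step I use $W_{-1}=1$ and drop the nonnegative term $s_K = W_{K-1} d_K / a_{K-1} \geq 0$.

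Next, the key monotonicity observation: since $\{a_k\}$ is non-increasing and positive, $a_k/a_{k-1} \leq 1$, so the multiplicative factor $a_k/(a_{k-1}(1+L A a_k^2)) \leq 1$, which shows $\{W_k\}$ itself is non-increasing. Therefore $W_k \geq W_{K-1}$ for every $k \in \{0,\ldots,K-1\}$, and
\begin{equation*}
	\sum_{k=0}^{K-1} \frac{W_k}{2} r_k \;\geq\; \frac{W_{K-1}}{2} \sum_{k=0}^{K-1} r_k \;\geq\; \frac{K W_{K-1}}{2} \min_{0 \leq k \leq K-1} r_k.
\end{equation*}
Combining the two displays and dividing by $K W_{K-1}/2$ yields \eqref{lem4:4} exactly.

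I don't expect any real obstacle here: the whole proof is algebraic bookkeeping, and the only place some care is needed is verifying that the peculiar form of the weighting recurrence is precisely what makes the $d_k$-coefficients telescope, together with the brief argument that monotonicity of $\{a_k\}$ transfers to monotonicity of $\{W_k\}$. The constant-$a_k$ special case of \cite[Lemma 2]{khaled2020} corresponds to $W_k = (1+LA a^2)^{-(k+1)}$, which serves as a sanity check on the weight formula.
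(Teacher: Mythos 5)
Your proposal is correct and follows essentially the same route as the paper's proof: multiply by $W_k/a_k$, telescope using the defining recurrence of $W_k$, drop the nonnegative terminal term, and use monotonicity of $\{W_k\}$ to relate the weighted sum to $K W_{K-1} \min_k r_k$. The only cosmetic difference is that the paper first divides by $\hat{W} = \sum_{k=0}^{K-1} W_k$ and then bounds $1/\hat{W} \leq 1/(K W_{K-1})$, whereas you bound the weighted sum from below directly; the two are equivalent.
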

\begin{proof}
We begin by taking \eqref{lem4:1} and multiplying through by $\frac{W_k}{a_k}$ for each $k \in \mathbb{N}$. Thus, 
\begin{equation}
    \frac{1}{2} r_k W_k \leq (1+a_k^2 L A)d_k \frac{W_k}{a_k} - d_{k+1} \frac{W_k}{a_k} + \frac{L C a_k W_k}{2}.
\end{equation}
Noting that $\frac{W_k (1 + a_k^2 L A)}{a_k} = \frac{W_{k-1}}{a_{k-1}}$, we simplify: 
\begin{equation}
    \frac{1}{2} r_k W_k \leq d_k \frac{W_{k-1}}{a_{k-1}} - d_{k+1} \frac{W_k}{a_k} + \frac{L C a_k W_k}{2}.
\end{equation}
This allows us to apply a telescoping sum, adding together the first $K-1$ inequalities we receive: 
\begin{equation}
    \frac{1}{2} \sum^{K-1}_{k=0} r_k W_k \leq \frac{ d_0}{a_{-1}} - \frac{W_{K-1} d_K}{a_{K-1}} + \frac{L C}{2} \sum^{K-1}_{k=0} a_k W_k.
\end{equation}
Let $\hat{W} := \sum^{K-1}_{k=0} W_k$ and divide through by $\hat{W}$ on both sides: 
\begin{align}
    \frac{1}{2 \hat{W}} \sum^{K-1}_{k=0} r_k W_k &\leq \frac{ d_0}{a_{-1} \hat{W}} - \frac{W_{K-1} d_K}{a_{K-1} \hat{W}} + \frac{L C}{2 \hat{W}} \sum^{K-1}_{k=0} a_k W_k, \\
     \frac{1}{2 \hat{W}} \sum^{K-1}_{k=0} r_k W_k + \frac{W_{K-1} d_K}{a_{K-1} \hat{W}} &\leq \frac{d_0}{a_{-1} \hat{W}} + \frac{L C}{2 \hat{W}} \sum^{K-1}_{k=0} a_k W_k.
\end{align}
To simplify further we note: 
\begin{equation}
     \frac{1}{2} \min_{0 \leq k \leq K-1} r_k = \frac{1}{2 \hat{W}} \left(\min_{0 \leq k \leq K-1} r_k\right) \sum^{K-1}_{k=0} W_k \leq  \frac{1}{2 \hat{W}} \sum^{K-1}_{k=0} r_k W_k + \frac{w_{K-1} d_K}{a_{K-1} \hat{W}}, \label{lem4:2}
\end{equation}
where \eqref{lem4:2} holds as $W_{K-1}$, $d_K$, $a_{K-1}$ and $\hat{W}$ are all strictly positive. Hence:
\begin{equation}
    \frac{1}{2} \min_{0 \leq k \leq K-1} r_k \leq \frac{d_0}{a_{-1} \hat{W}} + \frac{L C}{2 \hat{W}} \sum^{K-1}_{k=0} a_k W_k \label{lem4:22}.
\end{equation}
We now note $\frac{a_k}{a_{k-1}} \leq 1$, as $\{a_k\}_{n \in \mathbb{N} \cup \{-1\}}$ is non-increasing. Furthermore, $\frac{1}{1+ L A a_k^2} \leq 1$, as $L,A\ \text{and}\ a_k > 0$. Therefore: 
\begin{equation}
     W_{k} = W_{k-1} \frac{a_k}{a_{k-1}} \frac{1}{(1+L A a_k^2)} \leq W_{k-1},
\end{equation}
and hence, $\{W_n\}_{\{n \in \mathbb{N} \cup \{-1\} \}}$ is non-increasing. Using this, we note:
\begin{equation}
    \frac{1}{\hat{W}} = \frac{1}{\sum^{K-1}_{k=0} W_k} \leq \frac{1}{\sum^{K-1}_{k=0} \min_{0 \leq k \leq K-1} W_k} = \frac{1}{K W_{K-1}}. \label{lem4:3}
\end{equation}
Substituting \eqref{lem4:3} into the right-hand side of \eqref{lem4:22} and multiplying by 2 provides our result.
\end{proof}

We are now ready to prove our main result.

\begin{theorem} \label{thm_main_convergence}
Suppose that Assumptions~\ref{ass_resilient_agg}, \ref{ass_new_smoothness} and \ref{ass_new_gradients} hold.
If we run \algref{alg_brsgd} with learning rate sequence that satisfies: $\{\alpha_k\}_{k \in \mathbb{N}}$ is non-increasing, $\alpha_0 \leq \frac{1 - \sin (\alpha)}{L B'}$, $\sum_{k=0}^{\infty} \alpha_k^2 < \infty$, and $\lim_{k \to \infty} \frac{1}{k \alpha_{k-1}} = 0$ 
(where $B' = B \revision{E}$; $L$, $B$ and \revision{$E$} are defined in Assumptions~\ref{ass_new_smoothness} and \ref{ass_new_gradients}, and \revision{\eqref{eq_resilient_agg_weaker}---a consequence of \assref{ass_resilient_agg}(2)---respectively, and $\alpha$ comes from \assref{ass_resilient_agg}(1)}), then
\begin{equation}
    \lim_{K \to \infty} \left(\min_{0\leq k \leq K-1} \mathbb{E}[\|\nabla F (w_k)\|]\right) = 0.
\end{equation}
\end{theorem}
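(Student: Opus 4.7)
The plan is to derive a per-step descent inequality that fits exactly the hypothesis of \lemref{lem_weighting_sequence}, apply the lemma, and then verify the two resulting terms vanish using the learning-rate conditions.

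First, I would start from the $L$-smoothness bound \eqref{eq_lipschitz} applied at $w_{k+1} = w_k - \alpha_k A_k$ and take the conditional expectation $\mathbb{E}_k$. Using \assref{ass_resilient_agg}(1) to handle the inner-product term and \eqref{eq_resilient_agg_weaker} together with \assref{ass_new_gradients} to bound $\mathbb{E}_k[\|A_k\|^2]$, I obtain
\begin{equation*}
\mathbb{E}_k[F(w_{k+1})] \leq F(w_k) - \alpha_k(1-\sin(\alpha))\|\nabla F(w_k)\|^2 + \frac{L\alpha_k^2}{2}\bigl[2AE(F(w_k)-F_{\text{low}}) + B'\|\nabla F(w_k)\|^2 + EC\bigr],
\end{equation*}
with $B' = BE$. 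Taking total expectation, setting $d_k := \mathbb{E}[F(w_k) - F_{\text{low}}]$, and grouping terms gives
\begin{equation*}
\alpha_k\bigl[(1-\sin(\alpha)) - \tfrac{LB'}{2}\alpha_k\bigr]\,\mathbb{E}[\|\nabla F(w_k)\|^2] \leq (1 + L(AE)\alpha_k^2)\,d_k - d_{k+1} + \tfrac{LEC}{2}\alpha_k^2.
\end{equation*}

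Second, the learning rate condition $\alpha_0 \leq (1-\sin(\alpha))/(LB')$ combined with $\{\alpha_k\}$ non-increasing ensures $\tfrac{LB'}{2}\alpha_k \leq (1-\sin(\alpha))/2$, so the bracket on the left is at least $(1-\sin(\alpha))/2$. This puts the inequality in the form
\begin{equation*}
\tfrac{\alpha_k}{2}\,r_k \leq (1 + L(AE)\alpha_k^2)\,d_k - d_{k+1} + \tfrac{L(EC)}{2}\alpha_k^2,
\end{equation*}
with $r_k := (1-\sin(\alpha))\,\mathbb{E}[\|\nabla F(w_k)\|^2]$, which is exactly the hypothesis of \lemref{lem_weighting_sequence} (with $A \leftarrow AE$, $C \leftarrow EC$, $a_k \leftarrow \alpha_k$). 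Applying the lemma yields
\begin{equation*}
\min_{0 \leq k \leq K-1} \mathbb{E}[\|\nabla F(w_k)\|^2] \leq \frac{2 d_0}{(1-\sin(\alpha))\alpha_{-1} K W_{K-1}} + \frac{L(EC)}{(1-\sin(\alpha))K W_{K-1}} \sum_{k=0}^{K-1} \alpha_k W_k,
\end{equation*}
and by Jensen's inequality $\min_k \mathbb{E}[\|\nabla F(w_k)\|] \leq \sqrt{\min_k \mathbb{E}[\|\nabla F(w_k)\|^2]}$, so it suffices to show the right-hand side above tends to zero.

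The main obstacle, and the step requiring most care, is the asymptotic analysis of $W_k$. Unrolling the recursion gives $W_k = \alpha_k / \bigl(\alpha_{-1}\prod_{j=0}^{k}(1+L(AE)\alpha_j^2)\bigr)$. Since $\sum \alpha_k^2 < \infty$, the infinite product $P := \prod_{j=0}^\infty(1+L(AE)\alpha_j^2)$ converges (via $\log(1+x)\leq x$), so $\alpha_k/(\alpha_{-1}P) \leq W_k \leq \alpha_k/\alpha_{-1}$. The first term on the right is then bounded above by a constant multiple of $1/(K\alpha_{K-1})$, which tends to $0$ by the hypothesis $\lim_k 1/(k\alpha_{k-1}) = 0$. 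For the second term, the same two-sided bound gives
\begin{equation*}
\frac{\sum_{k=0}^{K-1}\alpha_k W_k}{K W_{K-1}} \leq \frac{P \sum_{k=0}^{K-1}\alpha_k^2}{K\alpha_{K-1}},
\end{equation*}
whose numerator is bounded (since $\sum\alpha_k^2 < \infty$) while the denominator tends to infinity, giving convergence to $0$. Combining these two limits completes the proof.
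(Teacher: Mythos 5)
Your proposal is correct and follows essentially the same route as the paper's own proof: the same descent inequality from $L$-smoothness combined with \assref{ass_resilient_agg} and \assref{ass_new_gradients}, the same reduction to the hypothesis of \lemref{lem_weighting_sequence} via the step-size condition, and the same asymptotic analysis of $W_k$ through the convergent infinite product $P$ and the summability of $\alpha_k^2$. The only cosmetic difference is that you state the two-sided bound $\alpha_k/(\alpha_{-1}P) \leq W_k \leq \alpha_k/\alpha_{-1}$ explicitly before taking limits, whereas the paper manipulates the product form of $W_k$ directly inside the limit; the argument is otherwise identical.
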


\begin{proof}
We begin by applying \eqref{eq_lipschitz} and simplifying using $w_{k+1} = w_k - \alpha_k A_k$: 
\begin{align}
    F(w_{k+1}) &\leq F(w_k) + \langle \nabla F(w_k), w_{k+1}-w_k \rangle + \frac{L}{2} \|w_{k+1}-w_k\|^2, \\
    &= F(w_k) - \alpha_k \langle \nabla F(w_k) , A_k \rangle + \frac{L \alpha_k^2}{2}  \|A_k\|^2.
\end{align}
We then take the expectation of the above conditioned on $\mathcal{F}_k$ from our filtration:
\begin{align}
    \mathbb{E}_k[F(w_{k+1})] \leq F(w_k) - \alpha_k \langle \nabla F(w_k) , \mathbb{E}_k[A_k] \rangle + \frac{L \alpha_k^2}{2}  \mathbb{E}_k[\|A_k\|^2].
\end{align}
To simplify, we apply \revision{\assref{ass_resilient_agg}(2)} to get: 
\begin{equation}
    (1 - \sin(\alpha))\|\nabla F(w_k)\|^2 \leq \langle \nabla F(w_k),\mathbb{E}_k[A_k] \rangle, \label{thm52:1}
\end{equation}
and, for some constant $\revision{E}$,
\begin{equation}
    \mathbb{E}_k [\|A_k\|^2] \leq \revision{E} \mathbb{E}_k [\|G(w_k)\|^2]. \label{thm52:2}
\end{equation}
Applying \eqref{thm52:1} and \eqref{thm52:2}, and subtracting $F_{\text{low}}$ from both sides, we simplify:
\begin{align}
    \mathbb{E}_k[F(w_{k+1})] - F_{\text{low}} &\leq  F(w_k) - F_{\text{low}} - \alpha_k \langle \nabla F(w_k) , \mathbb{E}_k[A_k] \rangle + \frac{L \alpha_k^2}{2}  \mathbb{E}_k[\|A_k\|^2],\\
    &\leq F(w_k) - F_{\text{low}} - \alpha_k (1 - \sin(\alpha))\|\nabla F(w_k)\|^2 \\
    &\quad + \frac{L \alpha_k^2}{2} \revision{E} \mathbb{E}_k [\|G(w_k)\|^2].
\end{align}
We now simplify using \assref{ass_new_gradients}:
\begin{equation}
    \revision{E} \mathbb{E}_k [\|G(w_k)\|^2] \leq \revision{E} \left(2A(F(w_k) - F_{\text{low}}) + B \|\nabla F(w_k)\|^2 + C \right).
\end{equation}
Collecting terms, we will rewrite the upper bound for our final term as:
\begin{equation}
    \revision{E} \mathbb{E}_k [\|G(w_k)\|^2 \leq 2A'(F(w_k) - F_{\text{low}}) + B' \|\nabla F(w_k)\|^2 + C'.
\end{equation}
 
Thus: 
\begin{align}
     \mathbb{E}_k[F(w_{k+1})] - F_{\text{low}} &\leq (1+L \alpha_k^2 A')(F(w_k) - F_{\text{low}}) \\
     &\quad - \left(\alpha_k \left(1 - \sin(\alpha)\right) -\frac{L \alpha_k^2 B'}{2}\right)\|\nabla F(w_k)\|^2 + \frac{L \alpha_k^2 C'}{2}.
\end{align}

Taking total expectations and using the Tower Property: 
\begin{align}
    \mathbb{E}[F(w_{k+1}) - F_{\text{low}}] &\leq (1 + L \alpha_k^2 A') \mathbb{E}[(F(w_k)-F_{\text{low}})] \\
    &\quad- \left(\alpha_k \left(1 - \sin(\alpha)\right) -\frac{L \alpha_k^2 B'}{2}\right) \mathbb{E}[\|\nabla F(w_k)\|^2] + \frac{L \alpha_k^2 C'}{2}. \label{thm52:11}
\end{align}

Defining $\delta_{k} := \mathbb{E}[F(w_{k}) - F_{\text{low}}]$, \eqref{thm52:11} becomes:
\begin{equation}
    \alpha_k\left(1 - \sin(\alpha) - \frac{L B' \alpha_k}{2}\right) \mathbb{E}[\|\nabla F(w_k)\|^2] \leq (1 + L \alpha_k^2 A') \delta_k - \delta_{k+1} + \frac{L \alpha_k^2 C'}{2}.
\end{equation}

Furthermore, our requirements on \revision{the learning rate sequence $\alpha_k$} guarantee  $1-\sin(\alpha) - \frac{LB'\alpha_k}{2} \geq \frac{1-\sin(\alpha)}{2}$, hence we define $r_k := (1-\sin(\alpha)) \mathbb{E}[\|\nabla F(w_k)\|^2]$ and get:
\begin{equation}
    \frac{\alpha_k}{2} r_k \leq (1 + L \alpha_k^2 A') \delta_k - \delta_{k+1} + \frac{L \alpha_k^2 C'}{2}.
\end{equation}
We now apply \lemref{lem_weighting_sequence} to our problem using the following weighting sequence:
\begin{equation}
    W_k := 
    \begin{cases}
    1,& \text{if}\ k = -1,\\
    W_{k-1} \frac{\alpha_k}{\alpha_{k-1} (1+LA' \alpha_k^2)},    & \text{otherwise.}
    \end{cases}
\end{equation}

Hence:
\begin{equation}
    \min_{0 \leq k \leq K-1} r_k \leq \frac{2 \delta_0}{\alpha_{-1} K W_{K-1}} + \frac{LC'}{K  W_{K-1}} \sum^{K-1}_{k=0} \alpha_k W_k, \label{eq_after_lemma}
\end{equation}
where we define $\alpha_{-1} := \alpha_0$ for convenience. 

We now show $\lim_{K \to \infty} \left( \min_{0 \leq k \leq K-1} r_k \right) = 0$. To do this, we return to the definition of our weighting sequence and note:
\begin{equation}
      W_{k} = \frac{ \alpha_k}{\alpha_{-1}} \prod^{k}_{j=0} \frac{1}{1 + L A' \alpha_j^2}.
\end{equation}

Hence: 
\begin{align}
  \min_{0 \leq k \leq K-1} r_k &\leq \frac{2 \delta_0}{\alpha_{-1}} \left( \frac{\alpha_{-1}}{K \alpha_{K-1}} \prod^{K-1}_{k=0} (1 + L A' \alpha_k^2) \right) \nonumber \\
&\quad +  L C' \left( \frac{\alpha_{-1}}{K \alpha_{K-1}} \prod^{K-1}_{k=0} (1 + L A' \alpha_k^2) \right) \sum^{K-1}_{k=0} \left( \frac{\alpha_{k}^2}{\alpha_{-1}} \prod^{k}_{j=0} \frac{1}{1 + L A' \alpha_j^2} \right). \label{eq_tmp1}
\end{align}
In order to simplify, we will bound a pair of infinite products and a summation. We will first show that: 
\begin{equation}
    \prod^{\infty}_{k=0} (1 + L A' \alpha_k^2) < \infty.
\end{equation}
Recall that, for a sequence of positive real numbers $\{y_n\}_{n \in \mathbb{N}}$,  $\prod^{\infty}_{k=0} y_k$  converges if and only if $\sum^{\infty}_{k=0} \log(y_k)$ converges. From our specification of $\alpha_k$, we know $\sum^{\infty}_{k=0} \alpha_k^2 < \infty$. Let:
\begin{equation}
    Q := \sum^{\infty}_{k=0} \alpha_k^2.
\end{equation}
When $y > -1$, $\log(1+y) \leq y$. Hence: 
\begin{align}
\sum^{\infty}_{k=0} \log(1 + L A' \alpha_k^2) &\leq \sum^{\infty}_{k=0} L A' \alpha_k^2 \leq L A' Q. \label{eq_tmp_ref}
\end{align}
Therefore, $P := \prod^{\infty}_{k=0} (1 + L A' \alpha_k^2) < \infty$. Secondly, we note that, for all $k \in \mathbb{N}$:
\begin{equation}
    \prod^{k}_{j=0} \frac{1}{1 + L A' \alpha_k^2}\leq 1, \label{eq_tmp2}
\end{equation}
as, $L A' \alpha_k^2 > 0$, for all $k \in \mathbb{N}$. This, in turn, allows us to simplify another summation. Specifically:  
\begin{align}
  \sum^{\infty}_{k=0} \left( \frac{\alpha_{k}^2}{\alpha_{-1}} \prod^{k}_{j=0} \frac{1}{1 + L A' \alpha_j^2} \right) &\leq \frac{1}{\alpha_{-1}} \sum^{\infty}_{k=0} \alpha_k^2 = \frac{Q}{\alpha_{-1}}.
\end{align}
We now take the limit of both sides of \eqref{eq_tmp1} and recover our result. 
\begin{align}
     \lim_{K \to \infty} &\left(\min_{0 \leq k \leq K-1} r_k\right) \nonumber \\
     &\leq \lim_{K \to \infty} \left( \frac{2  \delta_0}{\alpha_{-1}} \left( \frac{\alpha_{-1}}{K \alpha_{K-1}} \prod^{K-1}_{k=0} (1 + L A' \alpha_k^2) \right) \right) \\
    &\quad+  \lim_{K \to \infty} \left(L C' \left( \frac{\alpha_{-1}}{K \alpha_{K-1}} \prod^{K-1}_{k=0} (1 + L A' \alpha_k^2) \right) \sum^{K-1}_{k=0} \left( \frac{\alpha_k^2}{\alpha_{-1}} \prod^{k}_{j=0} \frac{1}{1 + \alpha_j^2 L A'} \right) \right), \\
	&\leq \revision{2\delta_0 \left(\lim_{K\to\infty} \frac{1}{K\alpha_{K-1}}\right) \prod_{k=0}^{\infty}(1+LA' \alpha_k^2)} \nonumber \\
	&\quad \revision{+ L C' \alpha_{-1} \left(\lim_{K\to\infty} \frac{1}{K\alpha_{K-1}}\right) \left(\prod_{k=0}^{\infty}(1+LA'\alpha_k^2)\right) \left(\sum^{\infty}_{k=0} \left( \frac{\alpha_{k}^2}{\alpha_{-1}} \prod^{k}_{j=0} \frac{1}{1 + L A' \alpha_j^2} \right)\right)}, \\
	&= \revision{(2\delta_0 P + L C' P Q) \lim_{K\to\infty} \frac{1}{K\alpha_{K-1}}}, \label{eq_nearly_done}
\end{align}
\revision{where $P$ is defined after \eqref{eq_tmp_ref}}. By assumption, we have $\lim_{K \to \infty} \frac{1}{K \alpha_{K-1}} = 0 $, therefore:
\begin{equation}
    \lim_{K \to \infty} \left(\min_{0 \leq k \leq K-1} (1-\sin(\alpha)) \mathbb{E}[\|\nabla F(w_k)\|^2]\right) = 0.
\end{equation}
Our result then follows from Jensen's inequality, $\mathbb{E}[\|\nabla F(w)\|]^2 \leq \mathbb{E}[\|\nabla F(w)\|^2]$.
\end{proof}

We note that our assumptions on the learning rate sequence $\{\alpha_k\}_k$ allows for sequences which decay as $\alpha_k \sim k^{-p}$ for any $p\in(1/2,1)$.

\revision{
\subsection{Iterate Selection} \label{sec_iterate_selection}
The above result shows that there is a subsequence of iterates which converges in expectation.
We now give a probabilistic procedure to select a single iterate with small expected gradient, inspired by the analysis in \cite[Theorem 6.1]{Lan2019}.
In this context, we assume that \algref{alg_brsgd} has been run for $K$ iterations, and we randomly select an iterate from $w_0,\ldots,w_{K-1}$ according to a specific probability distribution depending on the learning rate sequence $\alpha_k$.
However, compared to \cite[Corollary 6.1]{Lan2019}, the choice of $\alpha_k$ does not depend on $K$, and so \algref{alg_brsgd} can always be continued from its previous endpoint if the desired accuracy is not achieved.

This analysis requires only small modifications of \lemref{lem_weighting_sequence} and \thmref{thm_main_convergence}, which we present here.

\begin{lemma} \label{lem_weighting_sequence_random}
Suppose the assumptions of \lemref{lem_weighting_sequence} hold, including the definition of the weighting sequence $W_k$.
If, for any $K\geq 0$, we define the random variable $R_K\in\{0,\ldots,K-1\}$ by
\begin{align}
	\mathbb{P}(R_K=k) := \frac{W_{k}}{\sum^{K-1}_{i=0} W_i}, \quad k=0,...,K-1, \label{eq_weighting_iterate}
\end{align}
then
\begin{equation}
    \mathbb{E}[r_{R_K}] \leq \frac{2 d_0}{a_{-1} K W_{K-1}}  + \frac{L C}{K W_{K-1}} \sum^{K-1}_{k=0} a_k W_k.
\end{equation}
\end{lemma}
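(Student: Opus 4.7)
The plan is to mirror the proof of \lemref{lem_weighting_sequence}, replacing only the single step where the minimum over $k$ is used to lower bound the weighted average. Everything up through the telescoping sum carries over unchanged, since the recursion \eqref{lem4:1} and the defining recursion for $W_k$ are exactly the hypotheses here.

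First, I would multiply \eqref{lem4:1} by $W_k/a_k$, use the identity $\frac{W_k(1+a_k^2 LA)}{a_k} = \frac{W_{k-1}}{a_{k-1}}$ to telescope, and arrive at
\begin{align}
    \frac{1}{2} \sum^{K-1}_{k=0} r_k W_k + \frac{W_{K-1} d_K}{a_{K-1}} \leq \frac{d_0}{a_{-1}} + \frac{LC}{2} \sum^{K-1}_{k=0} a_k W_k,
\end{align}
exactly as in the previous proof. Dividing through by $\hat W := \sum_{i=0}^{K-1} W_i$ and dropping the nonnegative term $W_{K-1}d_K/(a_{K-1}\hat W)$ from the left-hand side gives
\begin{align}
    \frac{1}{\hat W} \sum^{K-1}_{k=0} r_k W_k \leq \frac{2 d_0}{a_{-1}\hat W} + \frac{LC}{\hat W} \sum^{K-1}_{k=0} a_k W_k.
\end{align}

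The second step is the key observation: by the definition \eqref{eq_weighting_iterate} of the random variable $R_K$, the left-hand side is precisely
\begin{align}
    \mathbb{E}[r_{R_K}] = \sum_{k=0}^{K-1} r_k \, \mathbb{P}(R_K = k) = \frac{1}{\hat W}\sum_{k=0}^{K-1} r_k W_k.
\end{align}
This replaces the bound $\frac{1}{\hat W}\sum r_k W_k \geq \min_{k} r_k$ that was used in \lemref{lem_weighting_sequence}; no minimum is taken here, so no monotonicity of $W_k$ is needed for this step.

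Finally, I would apply the same bound $1/\hat W \leq 1/(K W_{K-1})$ used in the original proof, which follows from $W_k$ being non-increasing (shown in \lemref{lem_weighting_sequence} from $a_k/a_{k-1}\leq 1$ and $1/(1+LAa_k^2)\leq 1$). Substituting yields the stated bound on $\mathbb{E}[r_{R_K}]$. There is no real obstacle: the argument is a direct probabilistic reinterpretation of the weighted-average step in the deterministic lemma, and the constants on the right-hand side are identical.
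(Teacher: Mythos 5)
Your proof is correct and follows essentially the same route as the paper: the paper also reuses the telescoping argument of Lemma~\ref{lem_weighting_sequence} verbatim and simply replaces the minimum-over-$k$ step with the observation that $\mathbb{E}[r_{R_K}] = \frac{1}{\hat{W}}\sum_{k=0}^{K-1} r_k W_k$, then applies the same bound $1/\hat{W} \leq 1/(K W_{K-1})$. Your remark that monotonicity of $W_k$ is no longer needed for the averaging step but is still needed for the final bound is accurate and slightly more explicit than the paper's own presentation.
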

\begin{proof}
The proof of this result is identical to that of \lemref{lem_weighting_sequence}, except that \eqref{lem4:2} is replaced by
\begin{align}
	\frac{1}{2} \mathbb{E}[r_{R_K}] = \frac{1}{2 \hat{W}} \sum^{K-1}_{k=0} r_k W_k \leq  \frac{1}{2 \hat{W}} \sum^{K-1}_{k=0} r_k W_k + \frac{w_{K-1} d_K}{a_{K-1} \hat{W}},
\end{align}
from which we conclude
\begin{align}
	\frac{1}{2} \mathbb{E}[r_{R_K}] \leq \frac{d_0}{a_{-1} \hat{W}} + \frac{L C}{2 \hat{W}} \sum^{K-1}_{k=0} a_k W_k,
\end{align}
in place of \eqref{lem4:22}.
\end{proof}

\begin{corollary} \label{cor_convergence_random_iterate}
Suppose that the assumptions of \thmref{thm_main_convergence} hold, and for any $K\geq 0$ we define the random variable $R_K$ as per \eqref{eq_weighting_iterate}.
Then
\begin{equation}
    \lim_{K \to \infty} \mathbb{E}[\|\nabla F (w_{R_K})\|] = 0.
\end{equation}
\end{corollary}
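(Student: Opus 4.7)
The plan is to mirror the proof of \thmref{thm_main_convergence} almost verbatim, substituting \lemref{lem_weighting_sequence_random} for \lemref{lem_weighting_sequence} at the single point where they are invoked. The preliminary steps---applying the $L$-smoothness upper bound \eqref{eq_lipschitz} to $F(w_{k+1})$, taking conditional expectation, invoking both parts of \assref{ass_resilient_agg} (via \eqref{eq_resilient_agg_weaker} for part~(2)), applying the expected smoothness \assref{ass_new_gradients}, and finally taking total expectations---do not depend on whether the iterate of interest is selected by a minimum or by random sampling. I would therefore reuse them verbatim to obtain the recursion
\begin{equation*}
	\frac{\alpha_k}{2} r_k \leq (1 + L \alpha_k^2 A') \delta_k - \delta_{k+1} + \frac{L \alpha_k^2 C'}{2},
\end{equation*}
with $r_k = (1-\sin(\alpha))\mathbb{E}[\|\nabla F(w_k)\|^2]$ and $\delta_k = \mathbb{E}[F(w_k) - F_{\text{low}}]$.

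Next, I would apply \lemref{lem_weighting_sequence_random} with the same weighting sequence $W_k$ used in the proof of \thmref{thm_main_convergence}, obtaining
\begin{equation*}
	\mathbb{E}[r_{R_K}] \leq \frac{2\delta_0}{\alpha_{-1} K W_{K-1}} + \frac{LC'}{K W_{K-1}}\sum_{k=0}^{K-1} \alpha_k W_k.
\end{equation*}
The right-hand side is identical to the bound on $\min_{0\leq k\leq K-1} r_k$ appearing in \eqref{eq_after_lemma}, so the limiting argument that follows \eqref{eq_after_lemma}---combining the hypothesis $\lim_{K\to\infty} 1/(K\alpha_{K-1}) = 0$ with convergence of the product $\prod_k (1 + LA'\alpha_k^2)$ (guaranteed by $\sum \alpha_k^2 < \infty$) and boundedness of the associated sum---carries over word for word, yielding $\mathbb{E}[r_{R_K}] \to 0$ as $K \to \infty$.

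To finish, I need to translate this into $\lim_{K\to\infty} \mathbb{E}[\|\nabla F(w_{R_K})\|] = 0$. Since $R_K$ is drawn from the deterministic distribution \eqref{eq_weighting_iterate} independently of the algorithm's stochasticity, conditioning on $R_K$ gives
\begin{equation*}
	\mathbb{E}[\|\nabla F(w_{R_K})\|^2] = \sum_{k=0}^{K-1} \mathbb{P}(R_K=k)\,\mathbb{E}[\|\nabla F(w_k)\|^2] = \frac{\mathbb{E}[r_{R_K}]}{1-\sin(\alpha)},
\end{equation*}
so $\mathbb{E}[\|\nabla F(w_{R_K})\|^2] \to 0$, and Jensen's inequality $\mathbb{E}[X]^2 \leq \mathbb{E}[X^2]$ applied to $X = \|\nabla F(w_{R_K})\|$ delivers the conclusion. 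There is no real obstacle here: the argument is essentially a bookkeeping rearrangement of the work already carried out for \thmref{thm_main_convergence}. The only point requiring mild care is the independence observation used in the last display, but this is immediate because the sampling distribution \eqref{eq_weighting_iterate} is a deterministic function of the learning rate sequence.
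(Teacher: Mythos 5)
Your proposal is correct and follows essentially the same route as the paper: rerun the proof of \thmref{thm_main_convergence} up to the recursion, invoke \lemref{lem_weighting_sequence_random} in place of \lemref{lem_weighting_sequence}, reuse the same limiting argument, and finish with Jensen's inequality. Your explicit conditioning on $R_K$ to identify $\mathbb{E}[r_{R_K}]$ with $(1-\sin(\alpha))\mathbb{E}[\|\nabla F(w_{R_K})\|^2]$ is a small extra bit of care that the paper leaves implicit, but it is not a different argument.
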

\begin{proof}
	The proof of this result is identical to that of \thmref{thm_main_convergence}, but we replace \eqref{eq_after_lemma} with 
	\begin{align}
		\mathbb{E}[r_{R_K}] \leq \frac{2 \delta_0}{\alpha_{-1} K W_{K-1}} + \frac{LC'}{K  W_{K-1}} \sum^{K-1}_{k=0} \alpha_k W_k,
	\end{align}
	which follows from \lemref{lem_weighting_sequence_random}.
	Hence instead of \eqref{eq_nearly_done} we reach
	\begin{align}
		\lim_{K \to \infty} \mathbb{E}[r_{R_K}] &\leq (2\delta_0 P + L C' P Q) \lim_{K\to\infty} \frac{1}{K\alpha_{K-1}} = 0, 
	\end{align}
	from which the result follows by Jensen's inequality, $(1-\sin(\alpha)) \mathbb{E}[\|\nabla F(w_{R_K})\|]^2 \leq r_{R_K}$.
\end{proof}
} 

\revision{
\section{Convergence Rate} \label{sec_complexity}
The previous section gives a convergence analysis for \algref{alg_brsgd} under general assumptions on the decreasing learning rate sequence $\alpha_k$.
We now specialize these results to give a convergence rate for the case $\alpha_k \sim k^{-p}$ for $p\in(1/2,1)$.

\begin{corollary} \label{cor_complexity}
	Suppose the assumptions of \thmref{thm_main_convergence} hold, and the learning rate sequence is given by
	\begin{align}
		\alpha_k = \frac{1-\sin(\alpha)}{L B' (k+1)^p}, \qquad k=0,1,2,\ldots,
	\end{align}
	for some $p\in(1/2,1)$.
	Then
	\begin{align}
		\min_{0 \leq k \leq K-1} \mathbb{E}[\|\grad F(w_k)\|] &\leq \left(\frac{e^{L A' Q_K} \left(2\delta_0  + L C' Q_K\right)}{(1-\sin(\alpha))K \alpha_{K-1}}\right)^{1/2}, \label{eq_complexity1}
	\end{align}
	where $Q_K := \sum_{k=0}^{K-1} \alpha_k^2$, $A':=AE$, $C':=CE$ (with $A$ and $C$ from \assref{ass_new_gradients} and $E$ from \eqref{eq_resilient_agg_weaker}), and $\delta_{0} := F(w_0)-F_{\text{low}}$.
	
	In particular, this means that
	\begin{align}
		\min_{0 \leq k \leq K-1} \mathbb{E}[\|\grad F(w_k)\|] = \bigO\left(e^{A'/(2 L (B')^2)} \left(2\delta_0+\frac{C'}{L (B')^2}\right)^{1/2} (L B')^{1/2} \frac{1}{K^{(1-p)/2}}\right),
	\end{align}
	as $K\to\infty$ (considering the dependency in terms of $K$, $\delta_0$, $A'$, $B'$, $C'$, and $L$).
\end{corollary}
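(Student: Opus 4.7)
The plan is to specialize the finite-$K$ estimates that already appear inside the proof of \thmref{thm_main_convergence}. Before the limit is taken there, the application of \lemref{lem_weighting_sequence} with weighting sequence $W_k = (\alpha_k/\alpha_{-1}) \prod_{j=0}^{k}(1+LA'\alpha_j^2)^{-1}$ gives
\begin{align}
(1-\sin(\alpha))\min_{0\le k \le K-1} \mathbb{E}[\|\grad F(w_k)\|^2] \leq \frac{2\delta_0}{\alpha_{-1} K W_{K-1}} + \frac{LC'}{K W_{K-1}} \sum_{k=0}^{K-1} \alpha_k W_k,
\end{align}
so my task reduces to estimating $1/W_{K-1}$ and the weighted sum in closed form, rather than merely showing they lead to vanishing limits.

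First I would invert the product form to get $1/W_{K-1} = (\alpha_{-1}/\alpha_{K-1}) \prod_{j=0}^{K-1}(1+LA'\alpha_j^2)$, and bound the product by $e^{LA'Q_K}$ via the same $\log(1+x)\leq x$ step used in the original theorem proof. Since each factor $(1+LA'\alpha_j^2)^{-1} \le 1$, the weighted sum satisfies $\sum_{k=0}^{K-1}\alpha_k W_k \leq \alpha_{-1}^{-1} Q_K$. Substituting these two estimates, the $\alpha_{-1}$ factors cancel out, yielding
\begin{align}
(1-\sin(\alpha))\min_{0\le k \le K-1} \mathbb{E}[\|\grad F(w_k)\|^2] \leq \frac{e^{L A' Q_K}\bigl(2\delta_0 + L C' Q_K\bigr)}{K \alpha_{K-1}},
\end{align}
and \eqref{eq_complexity1} then follows by Jensen's inequality $\mathbb{E}[\|\grad F(w_k)\|]^2 \leq \mathbb{E}[\|\grad F(w_k)\|^2]$ and taking a square root.

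For the asymptotic statement I would substitute $\alpha_k = (1-\sin(\alpha))/(LB'(k+1)^p)$. Because $p>1/2$, the series $\sum_{k\geq 1} k^{-2p}$ converges to $\zeta(2p)$, so $Q_K \leq \zeta(2p)(1-\sin(\alpha))^2/(LB')^2$ uniformly in $K$; hence $L A' Q_K = \bigO(A'/(L(B')^2))$ and $L C' Q_K = \bigO(C'/(L(B')^2))$, independent of $K$. Combined with $K\alpha_{K-1} = (1-\sin(\alpha))K^{1-p}/(LB')$, the square root of \eqref{eq_complexity1} factors out $(LB')^{1/2}/K^{(1-p)/2}$ and leaves the constant $e^{A'/(2L(B')^2)}(2\delta_0 + C'/(L(B')^2))^{1/2}$ as claimed. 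The main technical check, rather than any single calculational hurdle, is confirming that this $\alpha_k$ meets the three learning-rate hypotheses of \thmref{thm_main_convergence}: it is non-increasing; $\alpha_0 = (1-\sin(\alpha))/(LB')$ saturates the upper bound since $(k+1)^p\geq 1$; $\sum \alpha_k^2 < \infty$ because $2p>1$; and $1/(k\alpha_{k-1}) = \bigO(k^{-(1-p)})\to 0$ because $p<1$.
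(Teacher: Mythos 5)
Your proposal is correct and follows essentially the same route as the paper: specialize the finite-$K$ bound from the proof of \thmref{thm_main_convergence}, bound the product by $e^{LA'Q_K}$ via $\log(1+x)\leq x$, apply Jensen's inequality, and then estimate $Q_K$ and $K\alpha_{K-1}$ for the specific learning rate (the paper uses an integral comparison where you use $\zeta(2p)$, but both give $Q_K=\bigO(\alpha_0^2)$). Your explicit verification that this $\alpha_k$ satisfies the learning-rate hypotheses of \thmref{thm_main_convergence} is a worthwhile check the paper leaves implicit.
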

\begin{proof}
	We continue from \eqref{eq_tmp1} in the proof of \thmref{thm_main_convergence}, where after applying \eqref{eq_tmp2} we have
	\begin{align}
		\min_{0 \leq k \leq K-1} r_k &\leq \frac{2 \delta_0}{K \alpha_{K-1}} \prod^{K-1}_{k=0} (1 + L A' \alpha_k^2)  + \left( \frac{L C'}{K \alpha_{K-1}} \prod^{K-1}_{k=0} (1 + L A' \alpha_k^2) \right) \sum^{K-1}_{k=0}  \alpha_{k}^2,
	\end{align}
	where $r_k := (1-\sin(\alpha)) \mathbb{E}[\|\nabla F(w_k)\|^2]$.
	By the same reasoning leading to \eqref{eq_tmp_ref} we get
	\begin{align}
		\sum_{k=0}^{K-1} \log(1+LA'\alpha_k^2) \leq L A' Q_K,
	\end{align}
	and so $\prod^{K-1}_{k=0} (1 + L A' \alpha_k^2) \leq e^{L A' Q_K}$, giving
	\begin{align}
		\min_{0 \leq k \leq K-1} r_k &\leq \frac{e^{L A' Q_K} \left(2\delta_0  + L C' Q_K\right)}{K \alpha_{K-1}}.
	\end{align}
	Jensen's inequality gives $\mathbb{E}[\|\nabla F(w)\|]^2 \leq \mathbb{E}[\|\nabla F(w)\|^2]$, and we recover \eqref{eq_complexity1}.

	Now since $\alpha_k = \alpha_0 (k+1)^{-p}$ we have
	\begin{align}
		Q_K \leq \alpha_0^2 + \int_{1}^{K-1} \alpha_0^2 k^{-2p} dk \leq \alpha_0^2 + \int_{1}^{K} \alpha_0^2 k^{-2p} dk = \frac{\alpha_0^2}{2p-1}\left(2p-2 + \frac{1}{K^{2p-1}}\right).
	\end{align}
	So for $K$ large, we have $Q_K = \bigO(\alpha_0^2) = \bigO\left(\frac{1}{L^2 (B')^2}\right)$ and $K \alpha_{K-1} = \bigO\left(\frac{K^{1-p}}{L B'}\right)$, which gives second result.
\end{proof}

\begin{remark}
	Identical results to \corref{cor_complexity} hold when considering $\mathbb{E}[\|\grad F(w_{R_K})\|]$ instead of $\min_{k\leq K} \mathbb{E}[\|\grad F(w_k)\|]$, following the reasoning in \secref{sec_iterate_selection}.
\end{remark}

Our convergence rate result \corref{cor_complexity} essentially gives $\min_{k\leq K} \mathbb{E}[\|\grad F(w_k)\|] = \bigO\left(\frac{1}{K^{(1-p)/2}}\right)$.
In the best case, $p\to 1/2$, this convergence rate approaches $\bigO(K^{-1/4})$, or equivalently requiring $K = \bigO(\epsilon^{-4})$ to achieve optimality  level $\mathbb{E}[\|\grad F(w_k)\|] \leq \epsilon$ (or alternatively $\mathbb{E}[\|\grad F(w_{R_K})\|] \leq \epsilon$).
This matches the standard worst-case complexity of stochastic gradient descent for nonconvex functions \cite{pmlr-v119-drori20a}.
} 

\section{Conclusion} \label{sec_conclusion}
Having algorithms for \revision{distributed} learning in the presence of Byzantine adversaries is an important part of improving the utility of \revision{distributed} learning.
In this work we presented a simplified analysis of Byzantine resilient SGD (BRSGD) as developed in \cite{blanchard2017byzantinetolerant}, proving convergence in expectation \revision{and corresponding convergence rates} under more realistic assumptions on the objective function and the (non-corrupted) stochastic gradient estimators.
Since BRSGD is a generic algorithm, allowing the use of any aggregation function satisfying \assref{ass_resilient_agg}, our analysis applies to all the specific choices given in \cite{blanchard2017byzantinetolerant,xie2018generalized,pmlr-v80-mhamdi18a}.
Directions for future work include extending our analysis to more flexible learning rate regimes and other Byzantine resilient learning algorithms not based on the framework from \cite{blanchard2017byzantinetolerant}, such as those in \cite{pmlr-v80-yin18a,pmlr-v97-yin19a}.

\addcontentsline{toc}{section}{References} 
\bibliographystyle{siam}
\bibliography{refs2} 


\end{document}